\SetMathAlphabet{\mathbf}{normal}{OML}{mdput}{b}{n}
\numberwithin{equation}{section} 
\theoremstyle{plain} \newtheorem{remark}{Remark}
\theoremstyle{plain} \newtheorem{definition}{Definition}
\theoremstyle{plain} 
\theoremstyle{plain} 
\theoremstyle{plain} 
\theoremstyle{plain} \newtheorem{theorem}{Theorem}
\theoremstyle{plain} 
\theoremstyle{plain} \newtheorem{lemma}{Lemma}
\theoremstyle{plain}
\newcommand \E {\mathop{\mbox{\ensuremath{\mathbb{E}}}}\nolimits}
\renewcommand \Pr {\mathop{\mbox{\ensuremath{\mathbb{P}}}}\nolimits}
\newcommand{\set}[1]{\left\{\, #1 \,\right\} }
\newcommand{\cset}[2]{\left\{\, #1 ~\middle|~ #2 \,\right\} }
\newcommand{\hP} {\hat{P}}
\newcommand{\hPi} {\hP_\pol}
\newcommand{\Ceps} {\mathcal{N}_\epsilon}
\newcommand{\bP} {P^+}
\newcommand{\bPi} {\bP_{\pol}}
\newcommand\Reals {{\mathds{R}}}
\newcommand \CA {{\mathcal{A}}}
\newcommand \CB {{\mathcal{B}}}
\newcommand \CG {{\mathcal{G}}}
\newcommand \CN {{\mathcal{N}}}
\newcommand \CP {{\mathcal{P}}}
\newcommand \CR {{\mathcal{R}}}
\newcommand \CS {{\mathcal{S}}}
\newcommand \CT {{\mathcal{T}}}
\newcommand \CX {{\mathcal{X}}}
\newcommand \ba {{\mathbf{a}}}
\newcommand \bs {{\mathbf{s}}}
\newcommand \defn {\mathrel{\triangleq}}
\newcommand \argmax{\mathop{\rm arg\,max}}
\newcommand \st {s_t}
\newcommand \at {a_t}
\newcommand \stn {s_{t+1}}
\newcommand \Borel[1]{\CB({#1})}
\DeclareMathAlphabet{\mathpzc}{OT1}{pzc}{m}{it}
\newcommand \Dirichlet {\mathop{\mathpzc{Dirichlet}}\nolimits}
\newcommand \Actions {\CA}
\newcommand \States {\CS}
\newcommand \Trans {\CT}
\newcommand \tran[2] {\tau(#1 \mid #2)}
\newcommand \rew {\rho}
\newcommand \rews {\mathbf{\rho}}
\newcommand \Rews {\CR}
\newcommand \Util {U}
\newcommand \Value {V}
\newcommand \disc {\gamma}
\newcommand \mdp {\mu}
\newcommand \cmp {\nu}
\newcommand \CMPs {\CN}
\newcommand \pol {\pi}
\newcommand \pls[1] {\pol^*(#1)}
\newcommand \Pols {\CP}
\newcommand \regret {\mathcal{L}}
\newcommand \nature {\phi}
\newcommand \prior {\psi}
\newcommand \bel {\xi}
\newcommand \term {q}
\newcommand \smeas {\sigma}
\newcommand \loss {\ell}
\newcommand \EIG[2]{\CG(#1,#2)}
\newcommand\ind[1]{\mathop{\mbox{\ensuremath{\mathbb{I}}}}\left\{#1\right\}}
\newcommand\dd{\,\mathrm{d}}
\def\clap#1{\hbox to 0pt{\hss#1\hss}}
\def\mathrlap{\mathpalette\mathrlapinternal}
\def\mathrlapinternal#1#2{%
           \rlap{$\mathsurround=0pt#1{#2}$}}
\title{Sparse reward processes}
\author{Christos Dimitrakakis}
\begin{document}
\maketitle
 
\begin{abstract}
  We introduce a class of learning problems where the agent is presented with a series of tasks. Intuitively, if there is a relation among those tasks, then the information gained during execution of one task has value for the execution of another task. Consequently, the agent is intrinsically motivated to explore its environment beyond the degree necessary to solve the current task it has at hand. Thus, in some sense, the model explain the necessity of curiosity.  We develop a decision theoretic setting that generalises standard reinforcement learning tasks and captures this intuition. More precisely, we define a sparse reward process, as a multi-stage stochastic game between a learning agent and an opponent. The agent acts in an unknown environment, according to a utility that is arbitrarily selected by the opponent. Apart from formally describing the setting, we link it to bandit problems, bandits with covariates and factored MDPs. Finally, we examine the behaviour of a number of learning algorithms in such a setting, both experimentally and theoretically.
\end{abstract}

\newcommand{\comment}[1] {}

\section{Introduction}

This paper introduces sparse reward processes. These capture the problem of acting in an unknown environment, with an arbitrary unknown sequence of future objectives. The question is: how to act so as to perform well in the current objective, while at the same time acquiring knowledge that might be useful for future objectives? It is thus analogous to a number of real-world problems with high uncertainty about future tasks, as well as the more philosophical problem of motivating the utility of curiosity in human behaviour.

We formulate this setting in terms of a multi-stage game between a learning agent and an opponent of unknown type. The \emph{agent} acts in an unknown Markovian environment, which is the same in every stage.  At the beginning of each stage, a \emph{payoff} function is chosen by the opponent, which determines the agent's utility for that stage. The agent must act not only so as to maximise expected utility at each stage, but also so that he can be better prepared for whatever payoff function the opponent will select at the next stage.

We call such problems {\em sparse reward processes}, because of two types of sparseness. The first refers to payoff {\em scarcity}: the payoff available at every stage is bounded, while the agent wants to maximise the total payoff across stages. The second refers to the fact that the {\em payoff function} is sparse for an adversarial opponent. We posit that this is a good model of life-long learning in uncertain environments, where while resources must be spent learning about currently important tasks, there is also the need to allocate effort towards learning about aspects of the world which are not relevant at the moment. This is due to the fact that unpredictable future events may lead to a change of priorities for the decision maker. Thus, in some sense, the model ``explains'' the necessity of curiosity.

While our main contribution is the introduction of the problem, we also analyse some basic properties. We show that when the opponent is nature, the problem becomes an unknown MDP. For adversarial opponents, a good strategy for a two-stage version of the game is to maximise the information gain with respect to the MDP model, linking our formulation to exploration heuristics such as compression progress~\citep{schmidhuber1991possibility}, information gain~\citep{Lindley:InformationExperiment} and approximations to the value of information~\citep[Sec. 23.7]{Koller+Friedman:Probabilistic}. For the general adversarial case, we show that either sampling from the posterior~\cite{strens2000bayesian} or confidence-bound based approaches~\citep{mach:Auer+Cesa+Fischer:2002,JMLR:UCRL2} perform well compared to a greedy policy.
However, when the opponent is nature, a greedy policy performs very well, as the payoff stochasticity forces the agent to explore.

The next section introduces the setting and formalises the environment, the payoff, the policy and the complete sparse reward process. Sec.~\ref{sec:optimality} examines the properties that arise for the two opponent types: nature and adversarial. Sec.~\ref{sec:algorithms} briefly explains two algorithms for acting in SRPs, derived from two well-known reinforcement learning exploration algorithms based on confidence bounds and Bayesian sampling respectively. The experimental setup is described in  Sec.~\ref{sec:experiments}, while Sec.~\ref{sec:discussion} concludes the paper with a discussion of related work and links to other problems in reinforcement learning and decision theory.

\section{Setting}
\label{sec:formalisation}
The setting can be formalised as a multi-stage game between the agent and an opponent, on a stochastic environment $\cmp$. At the beginning of the $k$-th stage the opponent chooses a payoff $\rew_k$, which he reveals to the agent, who then selects an arbitrary policy $\pol_k$. It then acts in $\cmp$ using $\pol_k$ until the current stage enters a terminating state. This interaction results in a random sequence of state visits $\bs$, whose utility for the agent is $\rew_k(\bs)$. The agent's goal to minimise the total expected regret $\sum_k \Value^*_k - \Value_k$, where $\Value_k \defn \Value(\rew_k, \pol_k)$  is the expected utility and $\Value^*_k \defn \sup_\pol \Value(\rew_k, \pol)$ is the maximum expected utility for that stage.

If the dynamics are known to the agent, then selecting $\pol_k$ maximising the total expected payoff, only requires playing the optimal strategy for each stage and disregarding the remaining stages. When $\cmp$ is {\em unknown}, however, learning about the environment is important for performing well in the later stages. The setting then becomes an interesting special case of the exploration-exploitation problem.


\subsection{The environment}
\label{sec:environment}
At every stage, the agent is acting within an unknown environment.
We assume that the opponent, has no control over the environment's dynamics and that these are constant throughout all stages.  More specifically, we define the environment to be a controlled Markov process:
\begin{definition}
  A controlled Markov process (CMP) $\cmp$ is a tuple $\cmp = (\States, \Actions, \Trans)$, with state space $\States$, action space $\Actions$, and transition kernel
  \[
  \Trans \defn
  \cset{\tran{\cdot}{s,a}}{s \in \States, a \in \Actions},
  \]
  indexed in $\States \times \Actions$ such that $\tran{\cdot}{s,a}$ is a probability measure\footnote{We assume the measurability of all sets with respect to some appropriate $\sigma$-algebra. This will usually be the Borel algebra $\Borel{X}$ of the set $X$.} on $\States$. If at time $t$ the environment is in state $\st \in \States$ and the agent chooses action $\at \in \Actions$, then the next state $\stn$ is drawn with a probability independent of previous states and actions:
  \begin{equation}
    \label{eq:next-state}
    \Pr_\cmp(s_{t+1} \in S \mid s^t, a^t) = \tran{S}{\st,\at}
    \qquad
    S \subset \States.
  \end{equation}
  Finally, we shall use $\CMPs$ for the class of CMPs.
\end{definition}
In the above, and throughout the text, we use the following conventions.  We employ $\Pr_\cmp$ to denote the probability of events under a process $\cmp$, while we use $s^t \equiv s_1, \ldots, \st$ and $a^t \equiv a_1, \ldots, \at$ to represent sequences of variables. Similarly $\CS^t$ denotes product spaces, and $\CS^* \defn \bigcup_{t=0}^\infty \CS^t$ denotes the set of all sequences of states. Arbitrary-length sequences in $\CS^*$ will be denoted by $\bs$.

Throughout this paper, we assume that the transition kernel is not known to the agent, who must estimate it through interaction. On the other hand, the payoff function, chosen by the opponent, is revealed to the agent at the beginning of each stage.

\subsection{The payoff}
\label{sec:payoff}
At the $k$-th stage, a payoff function $\rew_k : \CS^* \to [0,1]$ is chosen by the opponent.  This encodes how desirable a state sequence is to the agent for the task.  In particular if $\bs, \bs' \in \CS^*$ are two state sequences, then $\bs$ is preferred to $\bs'$ in round $k$ if and only if $\rew_k(\bs) \geq \rew_k(\bs')$. As a simple example, consider a $\rew$ such that, sequences $\bs$ going through a certain state $s^*$ have a payoff of $1$, while the remaining have a payoff of $0$. 

The usual reinforcement learning (RL) setting can be easily mapped to this. Recall that in RL the agent is acting in a Markov decision process~\citep{Puterman:MDP:1994} $\mdp$ (MDP). This is a CMP equipped with a set of distributions
$\cset{R(\cdot \mid s)}{s \in \States}$ on rewards $r_t \in \Reals$. In the {\em infinite-horizon, discounted reward} setting, the utility is defined as the discounted sum of rewards $\sum_t \disc^t r_t$, where $\disc \in [0,1]$ is a discount factor. We can map this to our framework, by setting:  $\rew(s^T) = \sum_{t=1}^T \disc^t \E(r_t \mid s_t) = \sum_{t=1}^T \disc^t \int_{-\infty}^\infty r \dd{R}(r \mid s_t)$ to be the payoff for a state sequence $s^T$. 
While the theoretical development applies to general payoff functions, the experimental results and algorithms use the RL setting.

\subsection{The policy}
\label{sec:policy}
After the payoff $\rew_k$ is revealed to the decision maker, he chooses a policy $\pol_k$, which he uses to interact with the environment.  The CMP $\cmp = (\States, \Actions, \Trans)$ and the payoff function jointly define an MDP, denoted by $\mdp_k = (\States, \Actions, \Trans, \rew_k)$.  The agent's policy $\pol_k$ selects actions with distribution $\pol_k(\at \mid s^t)$, meaning that the policy is not necessarily stationary. Together with the Markov decision process $\mdp_k$, it defines a distribution on the sequence of states, such that:
\begin{align*}
  \Pr_{\mdp_k,\pol_k}(s_{t+1} \in S \mid s^t)
  &=
  \int_\Actions \tran{S}{a,\st} \dd{\pol}(a \mid s^t).
\end{align*}
This interaction results in a sequence of states $\bs$, whose utility to the agent is:
$\Util_k \defn \rew_k(\bs)$, $\bs \in \CS^*$.
Since the sequence of states is stochastic, we set the value of each stage to the {\em expected utility}:
\begin{equation}
  \Value_k \defn V(\rew_k, \pol_k) \defn \E_{\cmp, \pol_k} \Util_k
  = \int \rho_k(\bs) \dd{P}^\pol_\cmp(\bs),
\end{equation}
where ${P}^\pol_\cmp$ is the probability measure on $\CS^*$ resulting from using policy $\pol$ on CMP $\cmp$.
Finally, let us define the oracle policy for stage $k$:
\begin{definition}[Oracle stage policy]
  Given the process $\cmp$ and the payoff $\rew_k$ at stage $k$, the oracle policy is $\pls{\cmp,\rew_k} \defn \argmax_{\pol} \int_{\CS^*} \rew_k(\bs) \dd{P}^\pol_\cmp(\bs)$.
\end{definition}
This policy is normally unattainable by the agent, since $\cmp$ is unknown. The agent's \emph{goal} is to minimise the total expected regret~\footnote{We use a slightly different notion of regret from previous work. Instead of using the total accumulated reward, as in~\citep{JMLR:UCRL2}, we consider the total expected utility across stages. But, if one were to see the payoff obtained at every stage as the reward, the two measures of regret would be equivalent.}
 relative to the oracle:
\begin{equation}
\regret_K \defn \sum_{k=1}^K \Value(\rew_k, \pls{\cmp,\rew_k}) - \Value_k
\label{eq:regret}
\end{equation}

\subsection{Sparse reward processes}
\label{sec:srp}
The complete sparse reward process is a special case of a stochastic game. However, we are particularly interested in processes where only few states  have payoffs. We model this by mapping each payoff function to a {\em finite} measure on $\States^*$. 
\begin{definition}
  A sparse reward process is a multi-stage stochastic game with $K$ stages, where the $k$-th stage is a Markov decision process $\mdp_k  = (\CS, \CA, \CT, \rew_k)$, whose payoff function $\rew_k : \CS^* \to [0,1]$, is revealed to the agent immediately after $k-1$ stage is complete. The agent chooses policy $\pol_k$, with expected utility $\Value_k \defn V(\rew_k, \pol_k)$. The Markov decision process terminates at time $t$ the stage ends, if $s_t$ is a terminal state and with fixed termination probability $\term$ if $s_t$ is not a terminal state.

  The process is called $\smeas$-sparse for a measure $\smeas$ on $\CS^*$ if for every $\rew_k \in \Rews$, the payoff measure $\lambda_k$ on $\CS^*$, defined as
  $\lambda_k(S) = \int_S \rew_k(\bs) \dd{\smeas(\bs)}$, $\forall S \subset \CS^*$, satisfies $\lambda_k(\CS^*) \leq 1$. The agent's goal is to find a sequence $\pol_k$ maximising $\sum_{k=1}^K V_k$.
  \label{def:srp}
\end{definition}
The termination probability $\term$ is equivalent to an infinite horizon discounted reward reinforcement learning problem~\citep[see][]{Puterman:MDP:1994}.  Bounding the total payoff  forces the rewards available at most state sequences to be small (though not necessarily zero). Finally, it ensures that the opponent cannot place arbitrarily large rewards in certain parts of the space, and so cannot make the regret arbitrarily large. Throughout the paper, we take $\smeas$ to be the uniform measure. The construction also enables much of the subsequent  development, through the following lemma:
\begin{lemma}
  Given a payoff function $\rew$ for which there exists a payoff measure $\lambda$ satisfying the conditions of Def.~\ref{def:srp} for some $\smeas$, the utility of any policy $\pol$ on the MDP $\mdp = (\cmp, \rew)$, can be written as:
  \begin{equation}
    \label{eq:sparse-utility}
    \E_{\pol, \mdp} U = \int_{\States^*} p_{\pol,\cmp}(\bs) \dd{\lambda}(\bs),
  \end{equation}
  where $p_{\pol, \cmp}$ is the probability (density) of $\bs$ (with respect to $\smeas$) under the policy $\pol$ and the environment $\cmp$.  We assume that $p_{\pol, \cmp}$ always exists, but is not necessarfily finite.
\end{lemma}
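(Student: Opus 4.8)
The plan is to prove the identity by a change-of-measure (Radon--Nikodym) computation. By the construction of the payoff measure in Def.~\ref{def:srp}, the function $\rew$ is exactly a density of $\lambda$ with respect to $\smeas$, i.e. $\dd{\lambda} = \rew \dd{\smeas}$; and the standing hypothesis of the lemma is that $P^\pol_\cmp$ admits a density $p_{\pol,\cmp}$ with respect to $\smeas$, i.e. $\dd{P}^\pol_\cmp = p_{\pol,\cmp}\dd{\smeas}$. Since $\rew \geq 0$ and $p_{\pol,\cmp}\geq 0$ are both measurable, every integral appearing below is well defined as an element of $[0,\infty]$, and in fact of $[0,1]$ because $\rew \le 1$ and $\lambda(\States^*)\le 1$.

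First I would recall, from the definition of the stage value and $U = \rew(\bs)$, that $\E_{\pol,\mdp} U = \int_{\States^*} \rew(\bs)\dd{P}^\pol_\cmp(\bs)$. Next I would invoke the elementary rule for integrating against a measure given by a density: for any nonnegative measurable $g : \States^* \to [0,\infty]$,
\begin{equation*}
  \int_{\States^*} g(\bs) \dd{P}^\pol_\cmp(\bs) = \int_{\States^*} g(\bs)\, p_{\pol,\cmp}(\bs) \dd{\smeas}(\bs).
\end{equation*}
Applying this with $g = \rew$ gives $\E_{\pol,\mdp} U = \int_{\States^*} \rew(\bs)\, p_{\pol,\cmp}(\bs) \dd{\smeas}(\bs)$. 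Finally, using the same rule in the other direction --- now $\lambda$ is the measure with density $\rew$ with respect to $\smeas$, and $p_{\pol,\cmp}$ is the nonnegative integrand --- one obtains $\int_{\States^*} p_{\pol,\cmp}(\bs)\, \rew(\bs) \dd{\smeas}(\bs) = \int_{\States^*} p_{\pol,\cmp}(\bs) \dd{\lambda}(\bs)$, which is precisely \eqref{eq:sparse-utility}.

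The only delicate points are measure-theoretic bookkeeping rather than genuine obstacles. One must fix a single $\sigma$-algebra on $\States^* = \bigcup_{t\ge 0} \States^t$ --- the disjoint union of the product Borel algebras --- so that $\smeas$, $\lambda$, and $P^\pol_\cmp$ all live on the same space and $\rew$, $p_{\pol,\cmp}$ are measurable with respect to it; this is exactly the measurability convention already adopted in the paper. Because all integrands are nonnegative, Tonelli's theorem removes any concern about $\infty-\infty$, and the possible non-finiteness of $p_{\pol,\cmp}$ (on a $\smeas$-null set, or simply as an unbounded function) does not affect any of the integrals, so the equality holds with both sides being the same number in $[0,1]$. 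No absolute-continuity hypothesis beyond the assumed existence of $p_{\pol,\cmp}$ is required, since $\lambda \ll \smeas$ is immediate from its definition.
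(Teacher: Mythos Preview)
Your proof is correct and follows exactly the same change-of-measure computation as the paper: write $\E_{\pol,\mdp}U=\int \rew\,\dd P^\pol_\cmp=\int \rew\,p_{\pol,\cmp}\,\dd\smeas=\int p_{\pol,\cmp}\,\dd\lambda$. The paper presents this as a one-line chain of equalities; your version adds the measure-theoretic bookkeeping (nonnegativity, measurability, the $\sigma$-algebra on $\States^*$) that justifies each step, but the argument is identical.
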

\begin{proof}
  Via change of measure:
  $\E \Util
   = 
    \int_{\mathrlap{\States^*}} \rew(\bs) \dd{P_{\pol, \cmp}} (\bs)
    = 
    \int_{\mathrlap{\States^*}} \rew(\bs) p_{\pol, \cmp} (\bs) \dd{\smeas}(\bs)
    = 
    \int_{\mathrlap{\States^*}} p_{\pol, \cmp} (\bs) \dd{\lambda}(\bs)$.
\end{proof}

\section{Properties}
\label{sec:optimality}
The optimality of an agent policy depends on the assumptions made about the opponent.  In a worst-case setting, it is natural to view each stage as a zero-sum game, where the agent's regret is the opponent's gain. If the opponent is nature, then the sparse reward process can be seen as an MDP. This is also true in the case where we employ a prior over the opponent's type. 

\subsection{When the opponent is nature}
\label{sec:nature}
Consider the case when the opponent selects the payoffs $\rew_k$ by drawing them from some fixed, but unknown distribution with measure $\nature( \cdot \mid \theta)$, parametrised by $\theta \in \Theta$, such that:
$\Pr(\rew_k \in B) = \nature(B \mid \theta)$,
$\forall  k \in \set{1, 2, \ldots, K},~ \forall B \subset \Rews$.
In that case, the Bayes-optimal strategy for the agent is to maintain a belief on $\Theta \times \CMPs$ and solve the problem with backwards induction~\cite{Degroot:OptimalStatisticalDecisions}, if possible. This is because of the following fact:
\begin{theorem}
  When the opponent is Nature, the SRP is an MDP.
\end{theorem}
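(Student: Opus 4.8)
The plan is to prove the theorem constructively, by exhibiting an MDP $\tilde\mdp$ whose policies and expected total reward are in correspondence with those of the SRP, so that a backward-induction solution of $\tilde\mdp$ is SRP-optimal. This is the standard \emph{belief MDP} (Bayes-adaptive MDP) reduction, adapted to the nested structure of an SRP: within a stage the agent runs $\mdp_k$, and between stages Nature draws a fresh payoff from $\nature(\cdot\mid\theta)$. The only quantities hidden from the agent are the parameter $\theta$ and the transition kernel of $\cmp$; the revealed payoff $\rew_k$ and the visited states are observed. Hence, as the paragraph preceding the theorem already anticipates, the agent's information at any time is summarised by its posterior belief on $\Theta\times\CMPs$, together with the currently active payoff and the portion of the current stage's state sequence seen so far. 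I take the Markov state to be exactly this triple.

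Concretely, let $\Beliefs$ denote the set of probability measures on $\Theta\times\CMPs$ (with the Borel $\sigma$-algebra of the weak topology), and set the augmented state space to $\Beliefs\times\Rews\times\CS^*$, a state being $z=(\bel,\rew,\bs)$ read as ``posterior $\bel$, active payoff $\rew$, current-stage history $\bs=s^t$''; the action set is unchanged. The transition kernel composes three moves. Environment move: while $\bs$ has not terminated, draw $\stn$ from the predictive kernel $\bar\tau_\bel(\cdot\mid\st,\at)\defn\int_{\CMPs}\tran{\cdot}{\st,\at}\dd{\bel}(\cmp)$, append it to $\bs$, and replace $\bel$ by the posterior $\bel(\cdot\mid\bs,\at,\stn)$ --- which, since $\theta$ and $\cmp$ are a priori independent and the dynamics do not depend on $\theta$, only refines the $\CMPs$-marginal. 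Stage-end move: on termination (terminal $\st$, else with probability $\term$) the transition emits reward $\rew(\bs)$, a new payoff $\rew'$ is drawn from $\int_\Theta\nature(\cdot\mid\theta)\dd{\bel}(\theta)$, and the belief is updated once more on this observation. Reset move: the history is reinitialised by a fresh start state $s_0$, producing the next state $z'=(\bel',\rew',(s_0))$. All non--stage-end transitions carry zero reward; the resulting $\tilde\mdp$ is an undiscounted total-reward MDP, the within-stage stopping probability $\term$ playing the usual role of discounting.

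The key steps are then: (a) check $\tilde\mdp$ is a well-defined measurable MDP --- verifying that $\bel\mapsto\bar\tau_\bel$, $\bel\mapsto\int_\Theta\nature(\cdot\mid\theta)\dd{\bel}(\theta)$ and the two posterior-update maps are measurable, and that the posteriors exist as regular conditional distributions; (b) show that, under any (history-dependent) SRP policy, $(z_t)$ is Markov, which is precisely the sufficiency of the posterior: the conditional law of the next transition given the entire past equals $\bar\tau_\bel$, and likewise for payoffs since the $\rew_k$ are i.i.d.\ given $\theta$ and are unaffected by the agent's actions --- so this is Bayes' rule plus the tower property; (c) exhibit a law-preserving bijection between SRP policies and $\tilde\mdp$ policies, using that $\bel$ is a deterministic function of the history, so conditioning on $z^t$ or on the raw history is interchangeable; (d) conclude, by the tower property and Fubini (and, if one wants the integral form, the sparse-utility identity~\eqref{eq:sparse-utility}), that $\sum_{k=1}^K\Value(\rew_k,\pol_k)$ equals the expected total reward of the induced policy in $\tilde\mdp$, whence $\regret_K$ in~\eqref{eq:regret} is the regret in $\tilde\mdp$ relative to its optimal policy.

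The main obstacle is not a computation but making ``is an MDP'' precise in the face of two infinite-dimensionalities: the payoff is an arbitrary function $\CS^*\to[0,1]$, so the Markov state must literally carry the whole within-stage trajectory $\bs\in\CS^*$ (for decomposable RL-style payoffs one could instead carry an accumulator and the current state, but not in general), while the belief component lives in a space of measures. This is what forces the care in step (a): fixing the $\sigma$-algebras on $\Beliefs$ and $\Rews$, invoking Borel-space assumptions to obtain regular conditional posteriors (already implicit in the paper's measurability footnote), and checking the update maps are measurable. A secondary, easily-missed point worth stating explicitly is that the agent's actions influence neither the $\Theta$-marginal of the belief nor the draws $\rew_k$: the $\theta$-belief is a passive component of the augmented state --- it shapes the values of future stages but not the controllable within-stage dynamics --- so the reduction should not be read as a claim that exploration helps identify $\theta$.
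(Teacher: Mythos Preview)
Your proposal is sound, but it takes a substantially different route from the paper. The paper's argument is much shorter and more structural: it does \emph{not} build a belief MDP at all. Instead, it observes that the \emph{underlying} process---with the unknown CMP $\cmp$ and the unknown parameter $\theta$ fixed---is itself an ordinary MDP. Concretely, the paper augments the CMP state with the currently active payoff, taking $\States=\Rews\times\States_0$; within a stage the transition on the $\States_0$-coordinate is just $\tau$, while the $\Rews$-coordinate stays put; with probability $q$ the stage ends and a fresh payoff is drawn from $\nature(\cdot\mid\theta)$, so the $\Rews$-coordinate resamples. That is the whole proof: the SRP is an \emph{unknown} MDP in the usual RL sense, and the sentence about maintaining a belief on $\Theta\times\CMPs$ then simply appeals to the standard fact that the Bayes-optimal solution of an unknown MDP is obtained by backward induction on the associated belief MDP.

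What each approach buys: the paper's construction is elementary---no measurability of posterior-update maps, no $\CS^*$-valued state, no space of measures in the state---and it immediately explains why off-the-shelf exploration algorithms for unknown MDPs (UCRL, Thompson sampling) are applicable, which is exactly how the result is used later. Your construction does more work but delivers more: it makes the Bayes-adaptive MDP explicit rather than leaving it implicit, it handles general (non-additive) payoffs by carrying the full within-stage history $\bs$, and it is careful about the measurability issues the paper waves away. Your observation that the $\Theta$-marginal of the belief is uncontrollable is also a nice point the paper does not make. In short, both arguments are correct; the paper proves ``SRP $=$ unknown MDP'' and cites folklore for the Bayes-optimal reduction, whereas you prove ``SRP $=$ known belief MDP'' directly.
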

\begin{proof}
  We prove this by construction. For a set of reward functions
  $\Rews$, the state space of the MDP can be factored into the reward
  function and the state of the dynamics, so $\States = \Rews \times
  \States_0$. If there are $K$ reward functions, we can write the
  state space as $\States = \bigcup_{k=1}^K \States_k$. Let the action
  space be $\Actions$, such and a set of bijections $M_{ij} :
  \States_i \leftrightarrow \States_j$. In addition, for any $i,j$ all
  states $s \in \States_i$, the transition probabilities obey:
  $\Pr(s_{t+1} \mid s_t = s, a_t = a) = \Pr(s_{t+1} \mid s_t =
  M_{i,j}(s), a_t = a)$ and $\Pr(s_{t+1} \in \States_j \mid s_t \in
  \States_i, a_t = a) = q$ if $j \neq i$ and $1 - q$ otherwise.  It is
  easy to verify that this agrees with Def.~\ref{def:srp}.
\end{proof}
Unfortunately, the Bayes-optimal solution is usually intractable~\citep{Gittins:1989,Degroot:OptimalStatisticalDecisions,duff2002olc}. 
\subsection{When the opponent is adversarial}
\label{sec:adversarial}
We look at the problem from the perspective of Bayesian experimental design. In particular, the agent has a belief, expressed as a measure $\bel$ over $\CMPs$. Then, the $\bel$-expected utility of any policy $\pol$ is:
\begin{align}
  \label{eq:subjective-stage-utility}
  \E_{\bel, \pol} \Util
  =
  \int_{\CMPs}
    \int_{\CS^*} \Util(\bs) \dd{P}^\pol_\cmp(\bs)
  \dd{\bel}(\cmp).
\end{align}
Let $P^*_\cmp$ and $P^*_\bel$ be the probability measures on $\CS^*$ arising from the optimal policy given the full CMP $\cmp$ and given a particular belief $\bel$ over CMPs respectively, assuming known payoffs $\rew$.  The opponent can take advantage of the uncertainty and select a payoff function that maximises our loss relative to the optimal policy:
\begin{equation}
  \label{eq:loss}
  \loss_k(\bel, \mdp) \defn \max_\lambda \int_{\CS^*} (P^*_\cmp - P^*_\bel) \dd{\lambda_k}.
\end{equation}
This implies that the opponent should maximise the payoff for sequences with the largest probability gap between the $\cmp$- and $\bel$-optimal policies. To make this non-trivial, we have restricted the payoff functions to $\lambda(\CS^*) \leq 1$. In this case, maximising $\loss$ requires setting $\lambda(B) = 1$ for the set of sequences $B$ with the largest gap, and $0$ everywhere else. This the second type of sparseness that SRPs have.

We now show that in a special two-stage version of our game, a strategy that maximises the expected information gain minimises a bound on our expected loss. First, we recall the definition of~\citet{Lindley:InformationExperiment}: 
\begin{definition}[Expected information gain]
  Assume some prior probability measure $\bel$ on a parameter space $\CMPs$, and a set of experiments $\Pols$, indexing a set of measures $\cset{P^\pol_\cmp}{\cmp \in \CMPs, \pol \in \Pols}$ on $\CX$. The expected information gain of the experiment $\pol$ consisting of drawing an observation $x$ from the unknown $P^\pol_\cmp$ is:
  \begin{equation}
    \label{eq:expected-information-gain}
    \EIG{\pol}{\bel} \defn \int_\CMPs \int_\CX \ln \frac{P^\pol_\cmp(x) }{P^\pol_\bel(x)}\dd{P^\pol_\cmp}(x) \dd{\bel}(\cmp),
  \end{equation}
  where $P^\pol_\bel(x)$ is the marginal $\int_\CMPs P^\pol_\cmp(x) \dd{\bel}(\cmp)$.
  \label{def:expected-information-gain}
\end{definition}
In our case, the parameter space $\CMPs$ is the set of environment dynamics while the observation set $\CX$ is the the set of state-action sequences $(\CS \times \CA)^*$. 
\begin{theorem}
  Consider a {\em two-stage} game, where there for the the first stage,  $\rew_1 = 0$. Then, maximising the expected information gain, in sufficient to minimise the expected regret.
\end{theorem}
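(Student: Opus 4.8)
The plan is to use the fact that the first stage carries no payoff, so the entire regret of the two-stage game is incurred in the second stage and depends on the first-stage policy $\pol_1$ only through the posterior belief it induces. First I would reduce the problem to the second stage. Since $\rew_1 = 0$ we have $\Value(\rew_1,\pls{\cmp,\rew_1}) = \Value_1 = 0$ for every $\pol_1$, so the total regret is $\regret_2 = \Value(\rew_2,\pls{\cmp,\rew_2}) - \Value_2$. The only role of $\pol_1$ is to produce, through the observed state--action sequence $x$ drawn from $P^{\pol_1}_\cmp$, a posterior $\bel_2 \equiv \bel_2(x)$ over $\CMPs$ which the agent holds when $\rew_2$ is revealed; against the worst-case $\rew_2$ the best it can then do is play the $\bel_2$-optimal policy, so its stage-2 loss is exactly $\loss_2(\bel_2,\mdp)$ of~\eqref{eq:loss}. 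Taking the expectation over the true environment $\cmp\sim\bel$ and the first-stage trajectory,
\[
  \E[\regret_2 \mid \pol_1] \;=\; \E_{\cmp\sim\bel}\,\E_{x\sim P^{\pol_1}_\cmp}\,\loss_2\big(\bel_2(x),\mdp\big).
\]

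Next I would convert $\loss_2$ into a divergence between trajectory laws. Since the adversary is restricted to $\rew_2:\CS^*\to[0,1]$, the maximiser in~\eqref{eq:loss} extracts at most the total positive gap, giving $\loss_2(\bel,\mdp) \le \|P^*_\cmp - P^*_\bel\|_{\mathrm{TV}}$, and Pinsker's inequality then yields $\loss_2(\bel,\mdp) \le \sqrt{\tfrac12\, D_{\mathrm{KL}}(P^*_\cmp \,\|\, P^*_\bel)}$ (if $P^*_\bel$ averages the $\bel$-optimal policy over the belief, first insert the law of that policy run in the true $\cmp$ and split the two policies apart by the triangle inequality). Because the stage terminates with probability $\term$ at each non-terminal step, trajectories are a.s.\ finite, so the chain rule for relative entropy rewrites each such KL as a visitation-weighted sum of one-step divergences of the true kernel $\tran{\cdot}{s,a}$ from the belief-averaged kernel.

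Then I would take $\E_{\cmp\sim\bel}\E_{x\sim P^{\pol_1}_\cmp}$ of this bound and move the expectation inside the square root by Jensen: what appears is a posterior-averaged expected one-step KL between the true and the belief dynamics, i.e.\ a conditional mutual information, which by the chain rule for mutual information is bounded by the uncertainty about $\cmp$ that the first stage leaves unresolved. Using $\E_x[H(\bel_2(x))] = H(\bel) - \EIG{\pol_1}{\bel}$ --- which is just Definition~\ref{def:expected-information-gain} read as the mutual information $I(\cmp;x)$ --- one obtains $\E[\regret_2\mid\pol_1] \le g\big(\EIG{\pol_1}{\bel}\big)$ for a function $g$ decreasing in its argument, so the first-stage policy that maximises the expected information gain minimises this upper bound on the expected regret, which is the claimed sufficiency.

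The main obstacle is the step linking information to regret. The first stage acquires information about the dynamics precisely where $\pol_1$ probes, whereas the stage-2 loss depends on the dynamics along the trajectories of the optimal policies, which an adversary can try to steer toward poorly-explored regions; closing this gap cleanly requires either a finite state--action space with a uniform bound over all pairs (so that every pair's information enters $\EIG{\pol_1}{\bel}$), or an argument that an information-maximising $\pol_1$ already covers whatever a stage-2 optimal policy relies on. One must also control the visitation weights in the chain-rule expansion and the belief-to-value continuity implicit in splitting the two optimal policies apart. The surrounding divergence inequalities and the reduction to the second stage are routine; it is this bridge that carries the content.
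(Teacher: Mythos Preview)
Your plan shares the paper's core ingredients: bound the stage-2 worst-case loss by an $L_1$ (total-variation) distance between trajectory laws using $\lambda(\CS^*)\le 1$, pass to KL via Pinsker, and control the expectation with Jensen. The paper, however, stays entirely at the level of full trajectory laws: it does \emph{not} introduce your chain-rule decomposition into one-step transition divergences, nor your mutual-information identity $\E_x[H(\bel_2(x))]=H(\bel)-\EIG{\pol_1}{\bel}$. It simply writes $\EIG{\pol}{\bel}\ge\tfrac12\,\E_{\bel}\|P^\pol_\cmp-P^\pol_\bel\|_1^2$ and hence $\E_{\bel}\|P^\pol_\cmp-P^\pol_\bel\|_1\le\sqrt{2\,\EIG{\pol}{\bel}}$, and stops there. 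Your route is therefore more elaborate and, importantly, it is the only one of the two in which the monotonicity points the right way: in the paper's displayed inequality $\EIG$ appears as an \emph{upper} bound on a prior-belief distinguishability, so taken literally ``maximising $\EIG$'' would enlarge rather than shrink that bound. Your passage through the posterior entropy is precisely what is needed to obtain a bound that \emph{decreases} in $\EIG{\pol_1}{\bel}$.

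The obstacle you single out --- that the loss involves $P^*_\cmp,P^*_\bel$ under the stage-2 optimal policies, whereas the information gain concerns $P^{\pol_1}_\cmp,P^{\pol_1}_\bel$ for the first-stage exploration policy --- is real, and it is equally present (but unacknowledged) in the paper's argument: there the loss is bounded by $\|P^*_\cmp-P^*_\bel\|$, yet Pinsker is applied to $\|P^\pol_\cmp-P^\pol_\bel\|$, and no bridge between the two is offered. So your caution here is well placed; neither argument closes this gap, and the paper effectively treats the result as a heuristic link between information gain and worst-case loss rather than a fully rigorous regret bound. Your additional machinery (chain rule, conditional mutual information, visitation weights) is a sensible attempt to build that bridge, but it goes beyond what the paper actually proves.
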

\begin{proof}
  Through the definition of the stage $k$ loss \eqref{eq:loss}, and as $\lambda(S^*) \leq 1$:
  \begin{equation}
    \begin{split}
      \loss_k(\bel, \cmp)
      &= \max_\lambda \int_{\mathrlap{\CS^*}} ~(P^*_\cmp - P^*_\bel) \dd{\lambda} 
      \leq \max_\lambda \int_{\mathrlap{\CS^*}} ~|P^*_\cmp - P^*_\bel| \dd{\lambda}
      \\
      & \leq \lambda(S^*) \int_{\mathrlap{\CS^*}} ~|P^*_\cmp - P^*_\bel|
      \dd{\sigma} \leq \|P^*_\cmp - P^*_\bel\|,
    \end{split}
  \end{equation}
  where $\|P\| = \int |P| \dd{\smeas}$ is the $L_1$-norm with respect to $\smeas$.
If our initial belief is $\prior$ and the (random) posterior after the first stage is $\bel$, the expected loss of policy $\pol$ is given by:
$\E_{\prior,\pol} \|P^*_\cmp - P^*_\bel\|$.
  Finally, since for any measures $P, Q$ it holds that $2 \int \ln P/Q
  \dd{P} \leq \|P - Q\|^2$, we have: $\EIG{\pol}{\bel} \geq
  \E_{\bel,\pol} \frac{1}{2}\|P^\pol_\cmp - P^\pol_\bel\|^2_1$. Via
  Jensen's inequality we obtain that $\E_{\bel,\pol} \|P^\pol_\cmp -
  P^\pol_\bel\|_1 \leq \sqrt{2 \EIG{\pol}{\bel}}$.
\end{proof}
Thus, choosing a policy that maximises the expected information gain, minimises the expected worst-case loss at the next stage.  This is in broad agreement with past ideas of relating curiosity to gaining knowledge about the environment (e.g. work such as \citep{schmidhuber1991possibility}). Consequently, pure information-gathering strategies can have good quality guarantees in this two-stage adversarial game.

For more general games, we must employ other strategies, however, as we need to balance information gathering (exploration) with obtaining rewards in the current stage (exploitation).  Unfortunately, even finding the policy that maximises \eqref{eq:expected-information-gain} is as hard as finding the Bayes-optimal policy.  For this reason, in the next section we consider approximate algorithms.

\begin{algorithm}
  \begin{algorithmic}[1]
    \FOR{$k=1, \ldots, K$}
    \STATE Find the largest $\{\Ceps(\hPi) \mid \pol \in \Pols\}$ s.t. 
    $\Pr(\exists \pol : P_\pol \notin \Ceps(\hPi) \leq 1 / k)$, 
    \STATE Select $\pol_k \in \argmax_\pol V^+_\pol(\rew_k)$,  from \eqref{eq:optimistic-value}.
    \STATE Execute $\pol_k$, observe $\bs, \ba$; get payoff $\rew(\bs)$.
    \STATE Update $\{\hPi \mid \pol \in \Pols\}$ from $\bs, \ba$.
    \ENDFOR
  \end{algorithmic}
  \caption{UCSRP: Upper Confidence bound SRP}
  \label{alg:UCSRP}
\end{algorithm}
\begin{algorithm}
  \begin{algorithmic}[1]
    \STATE Set initial beliefs $\bel_1(P_\pol)$.
    \FOR{$k = 1, \ldots, K$}
    \STATE Sample $\hat{\cmp} \sim \bel_k$.
    \STATE Choose $\pol_k = \pls{\hat{\cmp}, \rew_k}$.
    \STATE Execute $\pol_k$, observe $\bs, \ba$; get payoff $\rew(\bs)$.
    \STATE Calculate new posterior $\bel_{k+1}(\cdot) \defn \bel_k(\cdot \mid \bs, \ba)$.
    \ENDFOR
  \end{algorithmic}
  \caption{BTSRP: Bayesian Thompson sampling SRP}
  \label{alg:BTSRP}
\end{algorithm}

\section{Algorithms}
\label{sec:algorithms}
We use two simple algorithms for SRPs, derived from two well-known strategies for exploration in bandit problems and reinforcement learning in general. The first, Upper Confidence bound SRP (UCSRP, Alg.~\ref{alg:UCSRP}) chooses policies based on simple confidence bounds, similarly to UCB~\citep{mach:Auer+Cesa+Fischer:2002} for bandit problems and UCRL~\citep{JMLR:UCRL2} for general reinforcement learning. The second, Bayesian Thompson sampling (BTSRP, Alg.~\ref{alg:BTSRP}), chooses a policy by drawing samples from a posterior distribution, as in~\cite{strens2000bayesian}. To simplify the exposition, we restrict our attention to some arbitrary stage $k$ and consider a setting where we have a finite set of policies $\Pols$. 

\textbf{UCSRP} (Alg.~\ref{alg:UCSRP}) uses confidence regions. An abstract view of the method is the following. For any policy $\pol$, let the empirical measure on $\States^*$ be $\hat{P}_\pol$, and let:
\begin{equation}
  \Ceps(\hPi) \defn \cset{Q}{\|Q - \hPi\| \leq \epsilon}
  \label{eq:confidence}
\end{equation}
be a \emph{confidence region} around the empirical measure, where $\|P\| = \int |P| \dd{\smeas}$ is the $L_1$-norm with respect to $\smeas$. Then we define the \emph{optimistic value}:
\begin{align}
  V^+_\pol \defn \max \cset{\E_Q \rew}{Q \in \Ceps(\hPi)} 
  \label{eq:optimistic-value}
\end{align}
 to be the value within the interval maximising the expected payoff. This can be seen as an optimistic evaluation of the policy $\pi$ that holds with high probability and we choose $\pol_k \in \argmax_\pol V^+_\pol$. For RL problems, there is no need to evaluate all policies. The algorithm can be implemented efficiently via the augmented MDP construction in~\cite{JMLR:UCRL2}.

\textbf{BTSRP} (Alg.~\ref{alg:BTSRP}) draws a candidate CMP $\cmp_k \sim \bel_k$ from the belief at stage $k$, and then calculates the stationary policy that is optimal for $(\cmp_k, \rew_k)$. At the end of the stage, the belief is updated via Bayes's theorem:
\begin{align}
  \bel_{k+1}(B) 
  &=
  \frac{\int_B P_\cmp(\bs \mid \ba) \dd{\bel_k}(\cmp)}
  {\int_\CMPs P_\cmp(\bs \mid \ba) \dd{\bel_k}(\cmp)},
  \label{eq:posterior}
\end{align}
This type of Thompson sampling~\citep{thompson1933lou} performs well in multi-armed bandit problems~\citep{agrawal:thompson}, but its general properties are unknown.

\iftrue
\begin{lemma}
  Consider a payoff function $\rew$ with corresponding payoff measure $\lambda$.  Assume that $\epsilon$ is such that confidence regions hold, i.e. that $P_\pol \in \Ceps(\hPi)$ for all $\pol$. For UCSRP to choose a sub-optimal policy $\pol$, it sufficient that:
  \[
  \E (\rew \mid P_{\pol^*})
  \leq
  \E (\rew \mid P_\pol) + 2 \int c_\pol \dd{\lambda}.
  \]
\end{lemma}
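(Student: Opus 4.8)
The plan is to reduce the selection event to a comparison of optimistic values and then bound those values using the standing confidence-holds assumption. Since UCSRP picks $\pol_k \in \argmax_{\pol} V^+_\pol$, a sufficient condition for it to return the sub-optimal $\pol$ (with ties broken in its favour) is $V^+_\pol \ge V^+_{\pol^*}$, where $\pol^*$ is the competitor against which sub-optimality is measured. First I would record the two ingredients I will manipulate: by the sparse-utility representation \eqref{eq:sparse-utility}, every optimistic value in \eqref{eq:optimistic-value} is an integral $\int q \dd\lambda$ of a density against the payoff measure, while the confidence region \eqref{eq:confidence} supplies a pointwise width $c_\pol$ on $\States^*$ with $|q - \hat{p}_\pol| \le c_\pol$ for every $Q \in \Ceps(\hPi)$.

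Next I would compute $V^+_\pol$ explicitly. Because $\lambda$ is a non-negative measure, the maximiser in \eqref{eq:optimistic-value} raises the density to $\hat{p}_\pol + c_\pol$ wherever $\lambda$ charges mass, giving the identity $V^+_\pol = \E(\rew \mid \hPi) + \int c_\pol \dd\lambda$. Invoking the hypothesis that the confidence regions hold, so that $P_\pol \in \Ceps(\hPi)$ and hence $|\E(\rew \mid \hPi) - \E(\rew \mid P_\pol)| \le \int c_\pol \dd\lambda$, I would sandwich $\E(\rew\mid P_\pol) \le V^+_\pol \le \E(\rew\mid P_\pol) + 2\int c_\pol \dd\lambda$. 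Applying the same confidence-holds assumption to $\pol^*$ yields the one-sided optimism estimate $V^+_{\pol^*} \ge \E(\rew\mid P_{\pol^*})$, which pins down the smallest value the optimal policy's optimistic score can take.

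Finally I would combine these. The hypothesis $\E(\rew\mid P_{\pol^*}) \le \E(\rew\mid P_\pol) + 2\int c_\pol \dd\lambda$ says exactly that the true optimum lies at or below the upper end of $\pol$'s optimistic interval, so that $V^+_\pol$ can reach the optimistic floor $\E(\rew\mid P_{\pol^*})$ of $V^+_{\pol^*}$; pushing the confidence-consistent empirical measures to their extremes then forces $V^+_\pol \ge V^+_{\pol^*}$, and $\pol$ is selected. The hard part will be the treatment of $V^+_{\pol^*}$: optimism only bounds it from below, so establishing that the displayed gap condition is genuinely \emph{sufficient} requires showing that $V^+_{\pol^*}$ actually attains its floor in the configuration consistent with the confidence-holds hypothesis, and in particular that the asymmetry between $c_\pol$ (which enters the bound) and $c_{\pol^*}$ (which does not) is resolved at this tightest configuration rather than inflating $V^+_{\pol^*}$ above $V^+_\pol$. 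Controlling this upper side of $V^+_{\pol^*}$ is the crux on which the sufficiency rests, and I would devote the bulk of the write-up to it.
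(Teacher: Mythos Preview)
Your proposal and the paper's proof run in opposite directions. The paper does \emph{not} establish sufficiency: it assumes that UCSRP has selected the sub-optimal $\pol$, hence $\E(\rew\mid\bP_{\pol^*})\le\E(\rew\mid\bPi)$, and then chains the confidence-holds assumption twice---once to push $\E(\rew\mid P_{\pol^*})$ below $\E(\rew\mid\bP_{\pol^*})$, and once to push $\E(\rew\mid\bPi)=\int(\hat p_\pol+c_\pol)\dd\lambda$ up to $\E(\rew\mid P_\pol)+2\int c_\pol\dd\lambda$. That is a proof of \emph{necessity}: ``UCSRP picks $\pol$'' $\Rightarrow$ displayed inequality. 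The word ``sufficient'' in the statement is a slip; the follow-up regret theorem uses the lemma precisely as a necessary condition, replacing $\ind{\pol_k=\pol}$ by the larger indicator $\ind{\epsilon_{\pol,k}\ge\Delta_{\pol,k}}$.

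The difficulty you flagged is therefore real and fatal for the direction you attempted. From the hypothesis you can only place $\E(\rew\mid P_{\pol^*})$ below the \emph{upper} end of $\pol$'s optimistic interval, while optimism gives only a \emph{lower} bound $V^+_{\pol^*}\ge\E(\rew\mid P_{\pol^*})$. Nothing in the hypothesis caps $V^+_{\pol^*}$ from above---indeed $c_{\pol^*}$ never appears---so $V^+_{\pol^*}$ can exceed $V^+_\pol$ even when the displayed inequality holds, and UCSRP would then pick $\pol^*$. There is no ``tightest configuration'' argument that rescues this: the confidence-holds assumption constrains $P_\pol$ and $P_{\pol^*}$, not the widths $c_\pol,c_{\pol^*}$, so you cannot force $V^+_{\pol^*}$ down to its floor. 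Rewrite the argument in the necessary direction and the whole thing becomes the short chain of inequalities the paper gives.
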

\begin{proof}
  Since UCSRP always chooses $\pol$ maximising $\bPi$, if we choose a sub-optimal $\pol$ then it must hold that $\E (\rew \mid \bP_{\pol^*}) \leq \E ( \rew \mid \bPi)$.  Since the confidence regions hold, $\E (\rew \mid P_{\pol^*}) \leq \E (\rew \mid \bP_{\pol^*})$, $\E (\rew \mid {P_{\pol^*}})\leq \E (\rew \mid {\bP_{\pol^*}})$ and $\E (\rew \mid \hPi) \leq \E (\rew \mid P_\pol + c_\pol)$. Consequently:
  \begin{align*}
    \E (\rew \mid P_{\pol^*})
    &\leq
    \E (\rew \mid {\bPi})
    = 
    \int (\hPi + c_\pol) \dd{\lambda}
    \\
    &\leq
    \int (P_\pol + c_\pol) \dd{\lambda} 
    =
    \E (\rew \mid P_\pol) + 2 \int c_\pol \dd{\lambda}
  \end{align*}
\end{proof}
\begin{theorem}
  Let $c_{\pol, k}$ be the relevant signed measure for policy $\pol$ in stage $k$. Assume that $\exists a, b > 0$ s.t.  $\|c_{\pol,k}\| \leq a n^{-b}_{\pol, k}$, with $n_{\pol_k} = \sum_{i=1}^k \ind{\pol_i = \pol}$. 
\end{theorem}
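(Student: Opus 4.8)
The statement is a cumulative-regret bound for UCSRP, and the plan is to turn the per-stage guarantee of the preceding lemma into one by a counting argument, treating separately the low-probability event that the confidence regions fail.

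Fix a stage $k$ and let $F_k$ be the event that $P_\pol \notin \Ceps(\hPi)$ for some $\pol \in \Pols$; by the choice of $\epsilon$ in line~2 of Alg.~\ref{alg:UCSRP}, $\Pr(F_k) \le 1/k$. On $F_k^c$ the preceding lemma applies: if $\pol_k$ is sub-optimal for $\rew_k$ then $\Value^*_k - \Value_k \le 2\int_{\CS^*} c_{\pol_k,k}\dd\lambda_k$, and if $\pol_k$ is optimal the stage regret is $0$, so in both cases the stage-$k$ regret is at most $2\int c_{\pol_k,k}\dd\lambda_k$. Since $\lambda_k$ has $\smeas$-density $\rew_k$ taking values in $[0,1]$ --- exactly the $\smeas$-sparsity normalisation --- the same estimate used in the proof of the two-stage information-gain theorem gives $\int c_{\pol_k,k}\dd\lambda_k \le \int|c_{\pol_k,k}|\dd\smeas = \|c_{\pol_k,k}\| \le a\, n_{\pol_k,k}^{-b}$. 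On $F_k$ we bound the stage regret crudely by $\Value^*_k \le \lambda_k(\CS^*) \le 1$. Taking expectations and summing over stages,
\[
\E \regret_K \;\le\; 2a\,\E\!\left[\sum_{k=1}^K n_{\pol_k,k}^{-b}\right] + \sum_{k=1}^K \frac{1}{k} \;\le\; 2a\,\E\!\left[\sum_{k=1}^K n_{\pol_k,k}^{-b}\right] + 1 + \ln K .
\]

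Next I reorganise the data-dependent sum according to which policy is played. Writing $N_\pol \defn n_{\pol,K}$ for the number of stages in which $\pol$ is chosen, the $j$-th time $\pol$ is selected contributes the term $j^{-b}$, so pathwise $\sum_{k=1}^K n_{\pol_k,k}^{-b} = \sum_{\pol\in\Pols}\sum_{j=1}^{N_\pol} j^{-b}$ with $\sum_{\pol\in\Pols} N_\pol = K$. Comparison with $\int_0^n x^{-b}\dd x$ yields $\sum_{j=1}^n j^{-b} \le \tfrac{n^{1-b}}{1-b}$ when $b<1$ (and $\le 1+\ln n$ when $b=1$, $\le \zeta(b)$ when $b>1$). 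For $b<1$, concavity of $x\mapsto x^{1-b}$ together with $\sum_\pol N_\pol = K$ gives, by Jensen's inequality, $\sum_{\pol\in\Pols} N_\pol^{1-b} \le |\Pols|^{b} K^{1-b}$. Combining,
\[
\E \regret_K \;\le\; \frac{2a}{1-b}\,|\Pols|^{b}\, K^{1-b} + 1 + \ln K ,
\]
which is sublinear in $K$; the cases $b=1$ and $b>1$ give, by the same chain, bounds of order $|\Pols|\ln K$ and $|\Pols|\,\zeta(b)$ respectively.

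The downstream steps (integral comparison, Jensen) are routine. The two points that need care are (i) bookkeeping the interplay between the random counts $n_{\pol_k,k}$, the good events $F_k^c$ and the expectation so that the failure contribution stays logarithmic rather than linear, and (ii) the change-of-measure inequality $\int c_{\pol_k,k}\dd\lambda_k \le \|c_{\pol_k,k}\|$, which is where the boundedness $\rew_k\in[0,1]$ enters --- precisely the property the sparse-reward construction was set up to provide. I expect (i) to be the main obstacle; once the first displayed inequality is in hand, everything is deterministic.
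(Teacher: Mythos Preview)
Your argument is sound and in fact more complete than what the paper does. Note first that the theorem as printed has no conclusion and the paper's own proof stops midway; it only reaches
\[
\regret_K \;\le\; \sum_{\pol \in \Pols} \sum_{k=1}^K \max_{\rew_k}\,\ind{\epsilon_{\pol,k} \ge \Delta_{\pol,k}}\,\Delta_{\pol,k},
\]
with $\epsilon_{\pol,k}=2\|c_{\pol,k}\|_{\lambda_k,1}$ and $\Delta_{\pol,k}=\rew_k'(\pol^*_k-\pol)$, and does not carry the estimate through to an explicit rate, nor does it address the event that the confidence regions fail.

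Your route is genuinely different. The paper takes a \emph{worst-case over payoff sequences} and organises the regret as a UCB-style sum of gap--indicator terms: policy $\pol$ contributes only on stages where its confidence width still dominates the optimality gap $\Delta_{\pol,k}$. Had it been finished, this would naturally lead to \emph{gap-dependent} bounds. You instead bound \emph{expected} regret, split off the failure event $F_k$ (at cost $\sum_k 1/k$), and on $F_k^c$ bound the stage regret \emph{directly} by the confidence radius via the preceding lemma and the change of measure $\int c\,\dd\lambda_k\le\|c\|$. This bypasses the gap entirely and, after the per-policy counting with integral comparison and Jensen, yields the clean worst-case rate $O\big(|\Pols|^b K^{1-b}\big)$ for $b<1$. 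What you lose relative to the paper's intended route is any problem-dependent refinement; what you gain is an argument that actually closes, an explicit treatment of the failure probability, and a bound that does not require a minimum-gap assumption.

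One minor point: your reading of the lemma as ``if $\pol_k$ is sub-optimal then $V^*_k-V_k\le 2\int c_{\pol_k,k}\dd\lambda_k$'' is the correct one --- the lemma's statement says ``sufficient'' but its proof establishes necessity, which is what you use.
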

\begin{proof}
  It will be convenient to use $\rew' \pol = V(\rew, \pol)$ to denote the value of $\pol$ for payoff function $\rew$. This has the usual vector meaning. Let $\rews = (\rew_k)$ be a sequence of payoffs and let $\pol^*_k \in \argmax_{\pol \in \Pols} \rew_k' \pol$ be an optimal policy at stage $k$ in hindsight, and let $\pol_k$ be our actual policy for that stage. Then the regret after $K$ stages, $\regret_K$, is bounded as follows:
  \begin{align*}
  \regret_K
  &\leq \max_\rews \sum_{k=1}^K  \left(\rew_k' \pol^*_k - \rew_k' \pol_k\right)
 \\ &
  = \max_\rews \sum_{k=1}^K \left( \rew_k' \pol^*_k - \rew_k' \sum_{\pol \in \Pols} \pol \ind{\pol_k = \pol} \right)
  \\
  &\leq \sum_{\pol \in \Pols} \sum_{k=1}^K  \max_{\rew_k} \ind{\pol_k = \pol}  \rew_k'(\pol^*_k - \pol)
  \\
  &
  \leq \sum_{\pol \in \Pols} \sum_{k=1}^K 
  \max_{\rew_k} 
  \ind{ \epsilon_{\pol, k} \geq \Delta_{\pol, k}}
  \Delta_{\pol, k}
  \end{align*}
  where $\epsilon_{\pol, k} = 2\|c_{\pol, k}\|_{\lambda_k, 1}$, $\Delta_{\pol, k} = \rew_k'(\pol^*_k - \pol)$.
\end{proof}
The actual shape of the confidence region, for UCSRP, and the belief, for BTSRP, depend on the model we are using. In general, they have the form $c_i = a n_i^{-b}$, where $n_i$ is the number of times the $i$-th policy was chosen and $a > 0$, and $b \geq 1$, but can be tighter if there is an interrelationship between policies. 
\fi

In both cases, a new \emph{stationary} policy is selected at the beginning of each stage. We consider two types of games, for which we employ slightly different versions of the main algorithms. In the first game, each stage terminates after the first action is taken. In the second game, each stage terminates only with constant probability $\term$ at every time-step.

\comment{CD: If we actually could ask somebody to suggest us the oracle policy, then we would not be able to learn anything in the next stage perhaps. So, we need to lower bound the information we gain by actually using the oracle policy.}

\section{Experiments}
\label{sec:experiments}

We consider games having a total of $K$ stages. In each stage, the agent observes an  payoff function of the form $\rew_k(s^T) = \sum_{t=1}^T r(s_t)$ and then selects a policy $\pol_k$.  The environment is Markov, and the stage terminates with fixed probability $q$, known to the agent. 

Confidence intervals for UCSRP can be constructed via the bound of \citet{weissman2003ide} on the $L_1$ norm of deviations of empirical estimates of multinomial distributions. In order to construct an upper confidence bound policy efficiently, we employ the method of UCRL~\citep{JMLR:UCRL2}, 
This solves an augmented MDP where the action space is enlarged to additionally select between high-probability MDPs. This guarantees that the policy acts according to the most optimistic MDP in the high-probability region, as required by UCSRP.

For the BTSRP policy, we maintain a product-Dirichlet distribution~(see for example \cite{Degroot:OptimalStatisticalDecisions}) on the set of multinomial distributions for all state-action pairs and a product of normal-gamma distributions for the rewards. We then draw sample MDPs by drawing parameters from each individual part of the product prior.

\subsection{Opponents}
\label{sec:experiments-opponents}
For reasons of tractability, and better correspondence with the reinforcement larning setting, the opponents we consider consider only additive payoff functions, such that the same reward $r(s)$ is always obtained when visiting state $s$ and the payoff of a sequence of states $s_1, \ldots, s_t$ is simply $\rew(s_1, \ldots, s_t) = \sum_{k=1}^t r(s_t)$. We consider two types of opponents, nature, and a myopic adversary.

\paragraph{Nature.} In this case, the reward functions are sampled uniformly such that $\sum_{s \in S} r(s) = 1$.

\paragraph{Adversarial.} This opponent has knowledge of $\cmp$, and also maintains the empirical estimate $\hat{\cmp}$. Assuming that the agent's estimates must be close to the empirical estimate, the payoff is selected to maximise the stage loss \eqref{eq:loss}. This is a sparse payoff, as explained in Sec.~\ref{sec:adversarial}, based on the empirical estimate rather than the (unknown) agent's belief:
\begin{align}
  \rew_k \in \argmax_\rew \Value(\rew, \pol^*(\cmp, \rew)) - \Value(\rew, \pol^*(\hat{\cmp}, \rew)).
\end{align}

\subsection{Results}
The results are summarised in Fig.~\ref{fig:markov-comparison}. These also include a greedy agent, i.e. the stationary policy which maximises payoff for the current stage in empirical expectation. In both cases, the regret suffered by the greedy agent grows linearly, while that of BTSRP and UCSRP grows slowly for adversarial opponents. However, when the opponent is nature this is no longer the case. This is due to the fact that the distribution of payoffs provides a natural impetus for exploration even for the greedy agent.
\begin{figure*}[htb]
  \centering
  \subfigure[Adversarial, $|\CS| = 4, |\CA|=2, \term=0.5$]{
    \psfrag{greedy}[r][r][1.0][0]{greedy}
    \psfrag{BTSRP}[r][r][1.0][0]{BTSRP}
    \psfrag{UCSRP}[r][r][1.0][0]{UCSRP}
    \psfrag{0}[r][r][1.0][0]{0}
    \psfrag{200}[r][r][1.0][0]{200}
    \psfrag{400}[r][r][1.0][0]{400}
    \psfrag{600}[r][r][1.0][0]{600}
    \psfrag{800}[r][r][1.0][0]{800}
    \psfrag{1000}[r][r][1.0][0]{1000}
    \psfrag{20}[r][r][1.0][0]{20}
    \psfrag{40}[r][r][1.0][0]{40}
    \psfrag{60}[r][r][1.0][0]{60}
    \psfrag{80}[r][r][1.0][0]{80}
    \psfrag{100}[r][r][1.0][0]{100}
    \psfrag{120}[r][r][1.0][0]{120}
    \includegraphics[width=.475\textwidth]{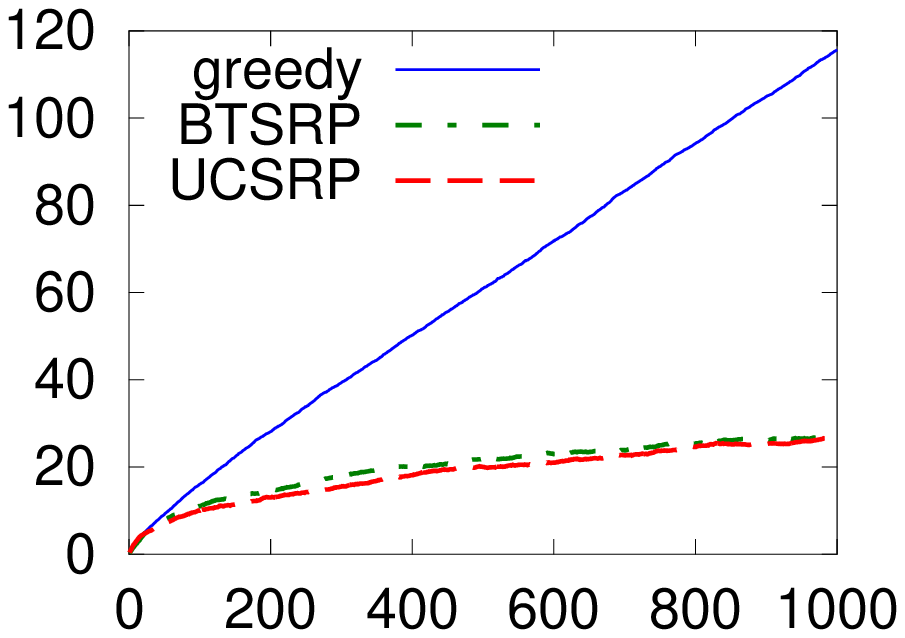}
    \label{fig:four-states}
  }
  \subfigure[Adversarial, $|\CS| = 8, |\CA|=4, \term=0.5$]{
    \psfrag{greedy}[r][r][1.0][0]{greedy}
    \psfrag{BTSRP}[r][r][1.0][0]{BTSRP}
    \psfrag{UCSRP}[r][r][1.0][0]{UCSRP}
    \psfrag{0}[r][r][1.0][0]{0}
    \psfrag{100}[r][r][1.0][0]{200}
    \psfrag{200}[r][r][1.0][0]{200}
    \psfrag{300}[r][r][1.0][0]{300}
    \psfrag{400}[r][r][1.0][0]{400}
    \psfrag{500}[r][r][1.0][0]{500}
    \psfrag{600}[r][r][1.0][0]{600}
    \psfrag{800}[r][r][1.0][0]{800}
    \psfrag{1000}[r][r][1.0][0]{1000}
    \includegraphics[width=.475\textwidth]{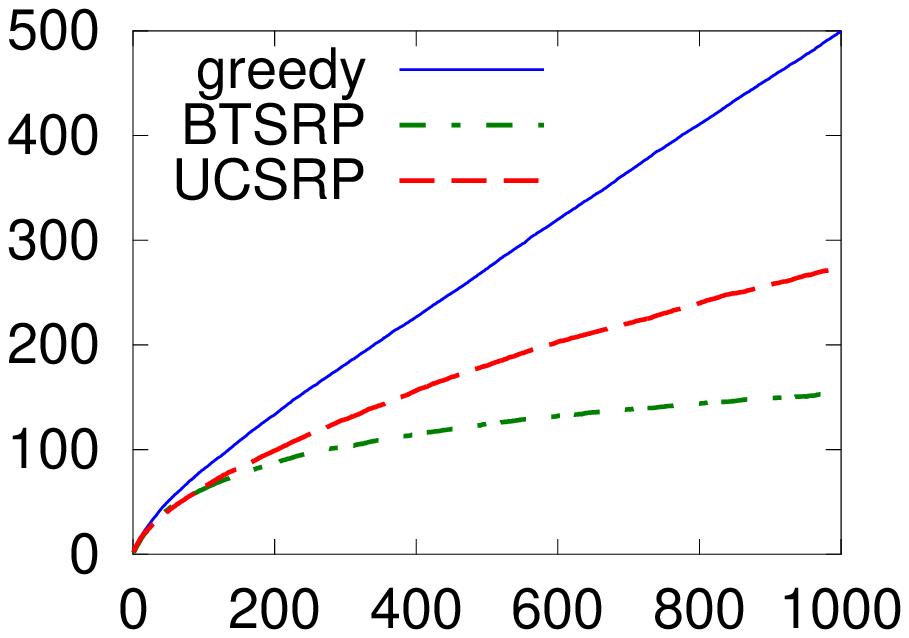}
    \label{fig:eight-states}
  }
  \subfigure[Nature, $|\CS| = 4, |\CA|=2, \term=0.5$]{
    \psfrag{greedy}[r][r][1.0][0]{greedy}
    \psfrag{BTSRP}[r][r][1.0][0]{BTSRP}
    \psfrag{UCSRP}[r][r][1.0][0]{UCSRP}
    \psfrag{0}[r][r][1.0][0]{0}
    \psfrag{200}[r][r][1.0][0]{200}
    \psfrag{400}[r][r][1.0][0]{400}
    \psfrag{600}[r][r][1.0][0]{600}
    \psfrag{800}[r][r][1.0][0]{800}
    \psfrag{1000}[r][r][1.0][0]{1000}
    \psfrag{20}[r][r][1.0][0]{20}
    \psfrag{40}[r][r][1.0][0]{40}
    \psfrag{60}[r][r][1.0][0]{60}
    \psfrag{80}[r][r][1.0][0]{80}
    \psfrag{100}[r][r][1.0][0]{100}
    \psfrag{120}[r][r][1.0][0]{120}
    \includegraphics[width=.475\textwidth]{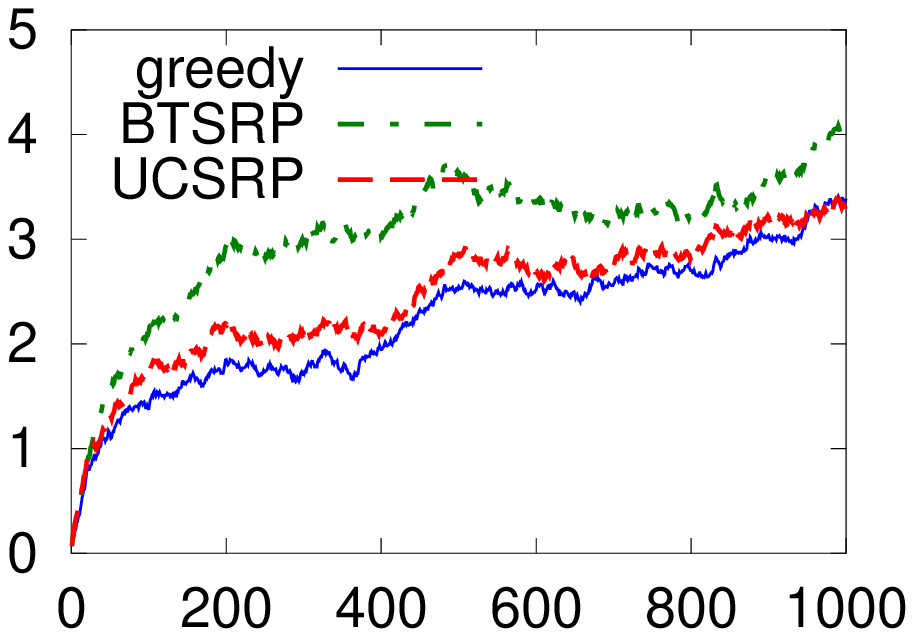}
    \label{fig:four-states-nature}
  }
  \subfigure[Nature, $|\CS| = 8, |\CA|=4, \term=0.1$]{
    \psfrag{greedy}[r][r][1.0][0]{greedy}
    \psfrag{BTSRP}[r][r][1.0][0]{BTSRP}
    \psfrag{UCSRP}[r][r][1.0][0]{UCSRP}
    \psfrag{0}[r][r][1.0][0]{0}
    \psfrag{100}[r][r][1.0][0]{200}
    \psfrag{200}[r][r][1.0][0]{200}
    \psfrag{300}[r][r][1.0][0]{300}
    \psfrag{400}[r][r][1.0][0]{400}
    \psfrag{500}[r][r][1.0][0]{500}
    \psfrag{600}[r][r][1.0][0]{600}
    \psfrag{800}[r][r][1.0][0]{800}
    \psfrag{1000}[r][r][1.0][0]{1000}
    \includegraphics[width=.475\textwidth]{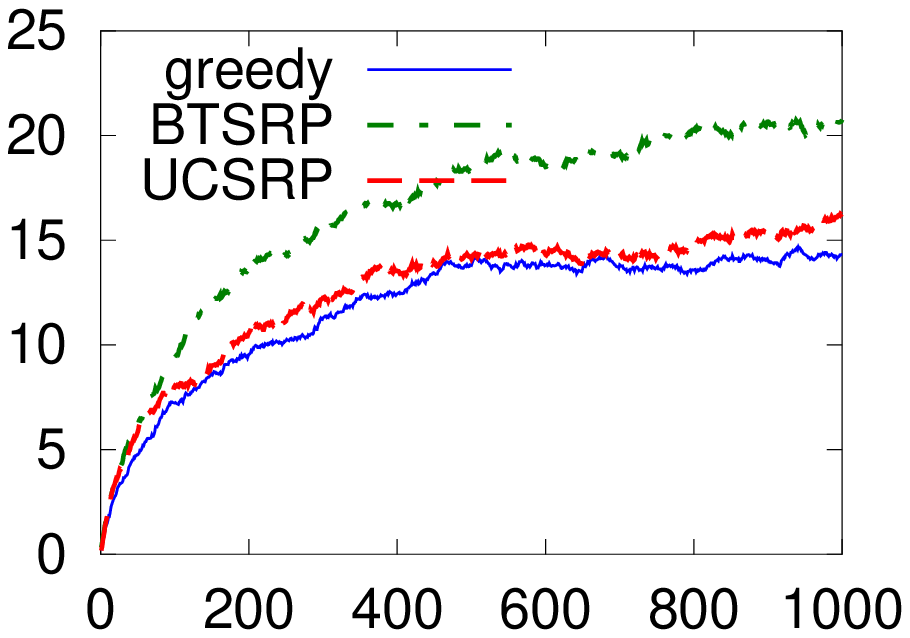}
    \label{fig:eight-states-nature}
  }
  \caption{Comparison of the expected cumulative regret after $k$
    stages, averaged over $10^3$ runs, between BTSRP and stagewise stationary greedy policies, on
    randomly generated MDPs. Against nature, none of the policies
    suffer a large amount of regret. Against an adversarial opponent,
    the greedy policy suffers linear regret. When the opponent is
    nature, the exploring policies enjoy no advantage.}
  \label{fig:markov-comparison}
\end{figure*}

\section{Discussion}
\label{sec:discussion}
We introduced {\em sparse reward processes}, which capture the problem of acting in an unknown environment with arbitrarily selected future objectives.  We have shown that, in an two-stage adversarial problem, a good strategy is to maximise the expected information gain. This links with previous work on curiosity and statistical decision theory. In fact, the connection of information gain to multiple tasks had been arguably recognised by~\citet{Lindley:InformationExperiment} 
\begin{quote}
  \ldots although indisputably one purpose
  of experimentation is to reach decisions, another purpose is to gain
  knowledge about the state of nature (that is, about the parameters)
  without having specific actions in mind.
\end{quote}
We have evaluated three algorithms on various problem instances. Overall, when the opponent is nature, even the greedy strategy performs relatively well. This is because it is forced to explore the environment by the sequence of payoffs. However, an adversarial opponent necessitates the use of the more sophisticated algorithms, which tend to explore the environment. This is partially explained by results in the related setting of multi-armed bandit problems with covariates~\cite{Sarkar:one-armed-bandit-covariates:1991,Yang:bandit-covariates:2002,pavlidis2008simulation:covariates,RigZee:bandits:covariates:10}. There, again the payoff function is given at the beginning of every stage. In that setting, however, the opponent is nature and, more importantly, the only thing observed after an action is chosen is a noisy reward signal. So, in some sense, it is a harder problem than the one considered herein (and indeed~\cite{RigZee:bandits:covariates:10} prove a lower bound). 
The one-armed covariate bandit~\cite{Sarkar:one-armed-bandit-covariates:1991} for an exponential family model, and proves that a myopic policy is asymptotically optimal, in a discounted setting. This ties in very well with our results on problems where the opponent is nature. 

Finally, SRPs are related to other multi-task learning settings. For example~\citep{lugosi:online-multi-task:colt:2008}, consider the problem of online multi-task learning with hard constraints. That is, at every round, the agent takes an action in each and every task, but there are some constraints which reflect the tasks' similarity. Somewhat closer to SRPs is the game-theoretic setting of \citet{mannor:geometric-multi-criterion:jmlr}, where again the agent is solving a multi-objective problem where the goal is that a reward vector approaches a target set. Finally, there is a close relation to the problem of learning with multiple bandits~\citep{dimitrakakis+lagoudakis:mlj2008,gabillon:mbbai}. Essentially, this problem involves finding near-optimal policies for a number of possibly related sub-problem within a search budget. In \citep{gabillon:mbbai} the tasks are unrelated bandit problems, while in \citep{dimitrakakis+lagoudakis:mlj2008} the tasks are actually different states of a Markov decision process and the goal is to find the best initial actions given a rollout policy.

Finally, our experimental results show that a greedy policy is a good strategy when the payoff sequence is (uniformly) stochastic. This naturally encourages exploration, even for non-curious agents, by forcing them to visit all states frequently.  UCSRP and BTSRP, which explore naturally, perform much better for adversarial payoffs. Then the greedy player suffers linear total regret. Consequently, we may conclude that curiosity is not in fact necessary when the constant change of goals forces exploration upon the agent.

\bibliographystyle{plainnat}
\bibliography{../../bib/misc,../../bib/mine}

\end{document}